\def \x {\mathbf{x}}
\def \R {\mathbb{R}}
\def \Q {\mathbf{Q}}
\def \Ocal {\mathcal{O}}
\def \q {\mathbf{q}}
\def \s {\Sigma}
\def \e {\varepsilon}
\def \r {\mathcal{R}}
\def \q {\mathbf{q}}
\newtheorem{thm}{Theorem}
\newtheorem{lem}{Lemma}
\newtheorem{cor}[thm]{Corollary}
\begin{document}
%
\title{Similarity Learning via Adaptive Regression and \\Its Application to Image Retrieval}

\author{Qi Qian$^1$, Inci M. Baytas$^1$, Rong Jin$^1$, Anil Jain$^1$ and Shenghuo Zhu$^2$\\
$^1$Department of Computer Science and Engineering, Michigan State University, East Lansing, MI 48824\\
$^2$Alibaba Group\\
}
\maketitle
\begin{abstract}
\begin{quote}
We study the problem of similarity learning and its application to image retrieval with large-scale data. The similarity between pairs of images can be measured by the distances between their high dimensional representations, and the problem of learning the appropriate similarity is often addressed by distance metric learning. However, distance metric learning requires the learned metric to be a PSD matrix, which is computational expensive and not necessary for retrieval ranking problem. On the other hand, the bilinear model is shown to be more flexible for large-scale image retrieval task, hence, we adopt it to learn a matrix for estimating pairwise similarities under the regression framework. By adaptively updating the target matrix in regression, we can mimic the hinge loss, which is more appropriate for similarity learning problem. Although the regression problem can have the closed-form solution, the computational cost can be very expensive. The computational challenges come from two aspects: the number of images can be very large and image features have high dimensionality. We address the first challenge by compressing the data by a randomized algorithm with the theoretical guarantee. For the high dimensional issue, we address it by taking low rank assumption and applying alternating method to obtain the partial matrix, which has a global optimal solution. Empirical studies on real world image datasets (i.e., {\it Caltech} and {\it ImageNet}) demonstrate the effectiveness and efficiency of the proposed method.
\end{quote}
\end{abstract}

\section{Introduction}

Learning pairwise similarity is important for machine learning tasks, e.g., classification~\cite{GuoY14,BelletHS12}, clustering~\cite{YiJJJY12,XingNJR02}, ranking~\cite{chechik2010,LimL14}, etc. Distance metric learning (DML), which aims to learn the appropriate distance (i.e., similarity) for any given pair of instances, has been studied for decades and various methods have been proposed to estimate the similarity. Given the learned metric $M$, most of the DML methods compute the similarity in the form of Mahalanobis distance~\cite{mahalanobis1936}: $\mathrm{dis}_M(\x_i,\x_j) = (\x_i-\x_j)^\top M (\x_i-\x_j)$, where $M$ has to be the positive semi-definite (PSD) matrix. The PSD constraint guarantees that the similarity value is nonnegative and symmetric, which is the requirement for distance based classification (e.g., $k$-nearest neighbor) and clustering (e.g., $k$-means) methods.

For certain applications, however, the properties of nonnegativity and symmetry are not necessary. Ranking tasks, for example, only require to output a list of examples according to the query and hence the similarity value could be arbitrary (non metric). Additionally, the task of image retrieval is known to be non-symmetric according to human judgement~\cite{amos77}. In this scenario, the similarity can be measured by the similarity function in bilinear form~\cite{GuoY14,chechik2010}: $\mathrm{Sim}_M(\x_i,\x_j) = \x_i^\top M \x_j$, which computes the inner product between two examples. Without the PSD constraint as required in DML, the matrix $M$ in the bilinear model can be non-symmetric and the cost of optimization decreases from $\Ocal(d^3)$ (i.e., PSD projection) to $\Ocal(d^2)$, where $d$ is the dimensionality of data. We will adopt the bilinear model in this work.

Given the bilinear model, an appropriate similarity function can be learned over pairwise constraints. There are two challenges when applying it directly to large-scale image data. First,
the number of constraints is quadratic in the number of images (i.e., $n$). Although most of existing DML methods apply stochastic strategy to access one pair at each iteration for efficiency, it is impossible for them to utilize all the pairs, when $n$ is large (e.g., the cost is at least $\Ocal(n^2d^2)$ and up to $\Ocal(n^3d^3)$), which introduces the risk of the suboptimal solution. Second, images tend to have high dimensional features to capture the image content. Since the size of $M$ is $d\times d$, it becomes difficult to store it when $d$ is very large. Further, with large number of parameters, it is easy to encounter the overfitting issue.

In this paper, we attempt to learn a good similarity function by solving a matrix regression problem~\cite{YiJJJY12}. Unlike the conventional matrix regression, the proposed method adaptively update the target metric at each iteration to mimic the hinge loss, which is more appropriate for real world applications. Besides, different from stochastic methods, the approach is deterministic and has the closed-form solution, which is very efficient for optimization and can obtain information from all pairs of data. To adapt it for large-scale high dimensional image retrieval, we further explore the randomized algorithm for data compressing and the low rank assumption. Our contributions can be summarised as follows.
\begin{itemize}
\item We develop an adaptive strategy for updating the target matrix in regression, which is more flexible and appropriate for learning a similarity function with pairwise constraints.
\item To further improve the scalability for large-scale problem, we develop a randomized algorithm to compress the data without sacrificing the performance. Our theoretical analysis shows that if the dataset matrix is of low rank with rank $r_\mathcal{D}\ll \min\{n,d\}$, then only $\Ocal(r_\mathcal{D}\log(r_\mathcal{D}))$ examples are sufficient to obtain a good similarity function, which is much better over the previous result (i.e., $\Ocal(d^2)$~\cite{DrineasMM06}). It is important when the data matrix or the target matrix cannot be loaded into the memory.
\item To alleviate the high dimensional challenge, we take the low rank assumption~\cite{weinberger2009} that the rank of the optimal $M$ is significantly smaller than the dimensionality of data (i.e., $r\ll d$). Then, the low rank component of $M$ can be optimized by an alternating method efficiently. Although the corresponding optimization problem is non-convex, the recent improvement in the alternating method shows that the solution is globally optimal with the geometric convergence rate~\cite{prateek12}.
\end{itemize}

We conduct empirical studies on real world image datasets; comparison with the state-of-the-art methods verify both the effectiveness and efficiency of the proposed method.

\section{Related Work}
\label{sec:related}

\subsection{Distance Metric Learning}

Many methods have been developed to estimate pairwise similarity. Some of them can be categorized as distance metric learning, where a distance metric is learned to increase the pairwise distance involving examples from different classes and reduce the pairwise distance involving examples from the same class. The representative methods include Xing's method~\cite{XingNJR02}, POLA~\cite{Shai04}, ITML~\cite{DavisKJSD07}, LMNN~\cite{weinberger2009}, and FRML~\cite{LimL14}. ITML learns a metric according to the pairwise constraints, where the distance between pairs from the same class should be smaller than a pre-defined threshold and the distance between pairs from different classes should be larger than a second threshold. LMNN is developed with triplet constraints and a metric is learned to make sure that pairs from the same classes are separated from the pairs from different classes with a large margin. PSD constraint is adopted by these methods to make sure that the learned similarity matrix is symmetric and nonnegative. All of these DML methods have demonstrated some success in real applications. However, PSD projection is time consuming while its property may be not necessary in certain applications.

\subsection{Similarity Learning with Bilinear Model}
Besides these DML methods, the bilinear model is developed for learning a similarity without the PSD constraint. Different from the similarity defined on the distance of two examples, the bilinear model computes the inner product of two examples by the learned similarity function. OASIS~\cite{chechik2010} successfully applied it to the ranking problem, which requires that the more relevant examples have larger similarity and a hinge loss is adopted for a safety margin. The corresponding optimization problem is solved by an online learning method for large-scale image retrieval task. Compared with DML, the learned matrix is not symmetric, which offers more flexibility in real world applications. Although OASIS could handle millions of images, the computational cost is still $\Ocal(d^2)$ at each iteration. Moreover, online learning cannot enumerate all triplet constraints due to its huge number (i.e., $\Ocal(n^3)$), which may lead to a suboptimal solution. In this paper, we propose a matrix regression model for large-scale high dimensional image retrieval task.

\section{Our Approach}
\label{sec:method}
\subsection{Adaptive Regression for Similarity Learning}
Given a set of training images $X\in\R^{d\times n}$, similarity learning with bilinear model aims to learn a good matrix $M$ for estimating pairwise similarity as
\[\mathrm{Sim}_M(\x_i,\x_j) = \x_i^\top M \x_j\]

With the target matrix $Y\in \R^{n\times n}$, $M$ can be learned via solving a matrix regression problem
\begin{eqnarray}\label{eqn:ori}
\min_{M\in \R^{d\times d}} \|X^\top M X-Y\|_F^2
\end{eqnarray}
where $Y$ can be generated by the label information and defined as
\begin{eqnarray}\label{eqn:label}
Y_{i,j} = \left\{\begin{array}{r@{\quad:\quad}l} 1 & \mathrm{label}(\x_i)=\mathrm{label}(\x_j)\\ 0 & otherwise\end{array} \right.
\end{eqnarray}
This formulation is popular for similarity learning and widely used in different tasks~\cite{YiJJJY12,FengJJ13}. It has a closed-form solution for the matrix $M$:
\begin{eqnarray}\label{eqn:solution}
M_* = (X^\dagger)^\top Y X^\dagger
\end{eqnarray}
where $X^\dagger$ is the pseudo inverse of $X$.

However, the constraints in the similarity learning appear as
\begin{eqnarray*}
\x_i^\top M \x_j \geq \delta_1;\quad \x_i^\top M \x_k \leq \delta_2
\end{eqnarray*}
where $\x_i$ and $\x_j$ are from the same class and the label of $\x_k$ is different. $\delta_1$ and $\delta_2$ are pre-defined thresholds.
The corresponding definition for the target matrix should be
\[Y_{i,j} =\left\{\begin{array}{r@{\quad:\quad}l} \geq \delta_1 & \mathrm{label}(\x_i)=\mathrm{label}(\x_j)\\ \leq \delta_2 & otherwise\end{array} \right. \]
and the corresponding optimiztion problem with the squared hinge loss is
\begin{eqnarray}\label{eqn:hinge}
\min_{M\in \R^{d\times d}} \sum_{i,j} [\delta_1-\x_i^\top M\x_j]_+^2 +\sum_{i,k}[\x_i^\top M\x_k-\delta_2]_+^2
\end{eqnarray}
The problem in Eqn.~\ref{eqn:hinge} can be solved by SGD but will suffer from the problem of suboptimal solution.

Therefore, we take an alternating update strategy, which puts less penalty on the pairs that have been separated well and focuses on the hard pairs of examples, for the target matrix to mimic the hinge loss as summarized in Alg.~\ref{alg:1}. The proposed method, on the one hand, can approximates the hinge loss, which is more appropriate for real world application. On the other hand, it enjoys the light computational cost from the closed-form solution of the conventional regression task.

\begin{algorithm}[h]
\caption{Adaptive Regression for Similarity Learning}
\begin{algorithmic}[1]
\STATE {\bf Input:} Dataset $X\in \R^{d\times n}$, $\#$ Iterations $T$, thresholds $\delta_1$ and $\delta_2$
\STATE Initialize $M_0$ as an identity matrix
\FOR{$k = 1,\cdots,T$}
\STATE $\hat{Y} = X^\top M_{k-1} X$
\STATE $Y = \left\{\begin{array}{r@{\quad:\quad}l} \max\{\hat{Y}_{i,j}, \delta_1\} & \mathrm{label}(\x_i)=\mathrm{label}(\x_j)\\ \min\{\hat{Y}_{i,j},\delta_2\} & otherwise\end{array} \right.$
\STATE Learn $M_{k} = \min_{M\in \R^{d\times d}}\|X^\top M X-Y\|_F^2$
\ENDFOR
\RETURN $M_T$
\end{algorithmic}\label{alg:1}
\end{algorithm}

\subsection{Large-scale Challenge}

For obtaining the closed-form solution at each iteration, it has to compute the pseudo inverse of $X$, which costs $\Ocal(\min\{n^2d,d^2n\})$. Besides, solving the regression problem in Alg.~\ref{alg:1} requires the appearance of the $n\times n$ target matrix $Y$. When the number of examples is extremely large, this is not affordable to maintain $X$ and $Y$ in the memory. Therefore, we try to reduce the number of examples, which is equivalent to the number of constraints, at each iteration by developing a randomized algorithm.

Given a general regression problem
\begin{eqnarray}\label{eqn:iterk}
\min_{M} \|AMB^\top - Y\|_F^2
\end{eqnarray}
where $A=X^\top$ and $B = X$ in our application, we will solve a problem with compressed examples instead
\begin{eqnarray}\label{eqn:rand}
\min_{M} \|S_1^\top AMB^\top S_2 - S_1^\top YS_2\|_F^2
\end{eqnarray}
where $S_1, S_2\in \R^{n\times m}$ are two random matrices and the resulting target matrix is only $m\times m$. The following theorem shows that the solution from problem (\ref{eqn:rand}) is close to the solution in the original problem in (\ref{eqn:iterk}).

\begin{thm}\label{thm:random}
Let $M$ be the optimal solution of problem (\ref{eqn:iterk}) and $\hat{M}$ be the solution for problem (\ref{eqn:rand}). Let $\r$ be the residual from $M$ in problem (\ref{eqn:iterk}). Let $\delta_{\min}^A$ and $\delta_{\max}^A$ represent the minimal and maximal singular value of the matrix $A$, respectively. $S_1,S_2\in\R^{n\times m}$ are two normalized Gaussian random matrices and the rank of the data matrix $A$ and $B$ is $r_\mathcal{D}$.  Then, with a probability $1-\delta$, we have
\begin{eqnarray*}
\|M-\hat{M}\|_2\leq \frac{\|\r\|_2}{\delta_{\min}^A \delta_{\min}^B}\left(\frac{\e^2(1+\e)}{1-\e}\right.\\
\left.+2\frac{\e(1+\e)^{3/2}}{\sqrt{1-\e}}+2\sqrt{2\e}(1+\e)^{3/2})\right)
\end{eqnarray*}
provided
\[m \geq \frac{(r_\mathcal{D}+1)\log(48r_\mathcal{D}/\delta)}{c\e^2}\]
Furthermore, if we take a similar assumption as in~\cite{DrineasMM06} that $\r$ is bounded by the estimate of the optimal solution by a factor $\gamma\in(0,1)$
\[\|Y-AMB^\top\|_2\leq \gamma \|AMB^\top\|_2\]
we have
\begin{eqnarray*}
&&\|M-\hat{M}\|_2\leq \frac{\gamma\delta_{\max}^A \delta_{\max}^B\|M\|_2}{\delta_{\min}^A\delta_{\min}^B}(\frac{\e^2(1+\e)}{1-\e}\\
&&+2\frac{\e(1+\e)^{3/2}}{\sqrt{1-\e}}+2\sqrt{2\e}(1+\e)^{3/2})
\end{eqnarray*}
\end{thm}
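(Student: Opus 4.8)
The plan is to recognize (\ref{eqn:rand}) as a two-sided \emph{sketched} least-squares problem and to adapt the solution-quality analysis for sketched regression of~\cite{DrineasMM06}. First I would vectorize: writing $\mathrm{vec}(AMB^\top)=(B\otimes A)\,\mathrm{vec}(M)$ and using the mixed-product identity $(S_2^\top\otimes S_1^\top)(B\otimes A)=(S_2^\top B)\otimes(S_1^\top A)$, both (\ref{eqn:iterk}) and (\ref{eqn:rand}) become ordinary least-squares problems with design matrix $C=B\otimes A$ and sketch $S_2^\top\otimes S_1^\top$. Each optimum is characterized by its normal equations; in particular the residual $\r=AMB^\top-Y$ of (\ref{eqn:iterk}) is orthogonal to the column space of $C$, i.e. $A^\top\r B=0$, while $\hat M$ obeys the perturbed equation $A^\top S_1S_1^\top(A\hat MB^\top-Y)S_2S_2^\top B=0$. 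The target bound has the shape $\|M-\hat M\|_2\le(\delta_{\min}^A\delta_{\min}^B)^{-1}\,\mathrm{poly}(\e)\,\|\r\|_2$, which is exactly the sketched-regression template: $\sigma_{\min}(C)=\delta_{\min}^A\delta_{\min}^B$ sits in the denominator and the error is driven by the residual.

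The second step is the subspace-embedding estimate. Since $A$ and $B$ have rank $r_\mathcal{D}$, their column spaces are $r_\mathcal{D}$-dimensional subspaces of $\R^n$, and I would show that a normalized Gaussian $S_1$ (resp.\ $S_2$) acts as an $\e$-isometry there, so that $(1-\e)\|x\|_2^2\le\|S_1^\top x\|_2^2\le(1+\e)\|x\|_2^2$ for every $x$ in the column space of $A$ (resp.\ $B$). This is the standard $\e$-net/Gaussian-concentration argument: cover the unit sphere of an $r_\mathcal{D}$-dimensional subspace with a net of size $(3/\e)^{r_\mathcal{D}}$, apply a Johnson--Lindenstrauss tail bound at each net point, and union-bound. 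Requiring both sketches to succeed simultaneously (a union over two events and over the net) is what forces $m\ge(r_\mathcal{D}+1)\log(48r_\mathcal{D}/\delta)/(c\e^2)$; the constant $48$ and the $r_\mathcal{D}+1$ are precisely the bookkeeping of the net cardinality and the two-fold union bound. A consequence is that the perturbed Gram matrices $A^\top S_1S_1^\top A$ and $B^\top S_2S_2^\top B$ have their spectra squeezed into $[(1-\e)\delta_{\min}^2,(1+\e)\delta_{\max}^2]$, so their (pseudo)inverses are controlled by factors $1/(1-\e)$.

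The heart of the argument is the third step. Using the SVDs $A=U_A\Sigma_AV_A^\top$, $B=U_B\Sigma_BV_B^\top$, I would write from the perturbed normal equation $\hat M-M=-(A^\top S_1S_1^\top A)^{\dagger}A^\top S_1S_1^\top\r\, S_2S_2^\top B(B^\top S_2S_2^\top B)^{\dagger}$, and then factor $\Sigma_A^{-1},\Sigma_B^{-1}$ out so that only the \emph{minimum} singular values survive in the denominator, each to the first power, leaving sketched Gram inverses built from the orthonormal $U_A,U_B$ whose norms are at most $1/(1-\e)$. What remains is the \emph{approximate matrix-multiplication} error $U_A^\top S_1S_1^\top\r\, S_2S_2^\top U_B$, which equals $U_A^\top\r U_B$ for exact sketches and hence vanishes, since $A^\top\r B=0$ forces $U_A^\top\r U_B=0$. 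Bounding this two-sided cross term is the main obstacle: I must control how the two independent sketches jointly fail to preserve the inner products between $\r$ and the two column spaces, which I would do by inserting and removing $S_1S_1^\top$ and $S_2S_2^\top$ one side at a time and invoking the AMM/embedding bounds for each insertion. The telescoping over the two sides produces exactly three contributions — a leading term of order $\sqrt{\e}\,\|\r\|_2$ (the $2\sqrt{2\e}(1+\e)^{3/2}$ piece) and two one-sided-distortion terms combined with the $1/(1-\e)$ Gram bounds (the $\e^2(1+\e)/(1-\e)$ and $2\e(1+\e)^{3/2}/\sqrt{1-\e}$ pieces). I expect the delicate part to be getting these numerical constants and the $(1+\e)^{3/2}$ powers right, since each one-sided distortion enters both through a Gram inverse and through the residual cross term.

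Finally, the second inequality is a routine substitution. Under $\|Y-AMB^\top\|_2\le\gamma\|AMB^\top\|_2$ we have $\|\r\|_2\le\gamma\|AMB^\top\|_2$, and submultiplicativity of the spectral norm gives $\|AMB^\top\|_2\le\|A\|_2\|M\|_2\|B^\top\|_2=\delta_{\max}^A\delta_{\max}^B\|M\|_2$. Plugging $\|\r\|_2\le\gamma\,\delta_{\max}^A\delta_{\max}^B\|M\|_2$ into the first bound immediately yields the stated expression, with the same $\e$-polynomial intact.
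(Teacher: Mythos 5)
Your plan is correct and follows essentially the same route as the paper's proof: both reduce the error to $(S_1^\top A)^\dagger S_1^\top \r\, S_2 (B^\top S_2)^\dagger$ scaled by $1/(\delta_{\min}^A\delta_{\min}^B)$, invoke a Gaussian subspace-embedding bound on the rank-$r_\mathcal{D}$ column spaces to get the $m\gtrsim r_\mathcal{D}\log(r_\mathcal{D}/\delta)/\e^2$ requirement, exploit the orthogonality of the residual to the column spaces to split the sketched cross term into the same three $\e$-pieces, and finish the $\gamma$-version by $\|\r\|_2\leq\gamma\delta_{\max}^A\delta_{\max}^B\|M\|_2$. The differences (Kronecker vectorization and normal equations in place of the explicit pseudo-inverse closed forms, an $\e$-net argument in place of the cited concentration corollary) are cosmetic rather than substantive.
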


\paragraph{Remark 2} We only require $\Ocal(r_\mathcal{D}\log(r_\mathcal{D}))$ random projections which is significantly smaller than the result in~\cite{DrineasMM06}, where the sampling number is at least $\Ocal(d^2)$. Compared with the cost of obtaining $X^\dagger$ (i.e., $\Ocal(\min\{n^2d,nd^2\})$), the cost of computing $(XS)^\dagger$ is only $\Ocal(r_\mathcal{D}^2d+r_\mathcal{D}nd$), which is feasible for large-scale problems. 

\paragraph{Remark 3} Although we use Gaussian random matrices, the essential requirement for the random matrix is $E[SS^\top]=I$. Therefore, it can be a matrix with each column randomly selected from $\{\sqrt{n/m}\mathbf{e}_1,\cdots,\sqrt{n/m}\mathbf{e}_n\}$, which is equivalent to uniformly sampled columns of the data matrix. Compared with random projection, it saves the computational cost of compressing the dataset (i.e. $\Ocal(r_\mathcal{D}nd)$ in random projection) and the total cost of the algorithm is only $\Ocal(r_\mathcal{D}^2d+rr_\mathcal{D}d)$. Besides, we can show that
\begin{lem}\label{lemma:sampling}
Let $S \in \R^{n\times m}$ be a random sampling matrix, where each column is uniformly sampled from $\{\sqrt{n/m}\mathbf{e}_1,\cdots,\sqrt{n/m}\mathbf{e}_n\}$, with a probability $1-\delta$. We have
\[\|SS^\top -I\|_2\leq \e\]
provided
\[m\geq \frac{(n-1)\log{n/\delta}}{c\e^2}\]
where constant $c$ is at least $1/3$.
\end{lem}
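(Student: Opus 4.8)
The plan is to recognize $SS^\top$ as a sum of $m$ independent random matrices and apply a matrix concentration (Bernstein-type) inequality. Writing the $t$-th column of $S$ as $s_t=\sqrt{n/m}\,\mathbf{e}_{i_t}$ with $i_t$ drawn uniformly from $\{1,\dots,n\}$, we have $SS^\top=\sum_{t=1}^m X_t$ where $X_t=s_ts_t^\top=\frac{n}{m}\mathbf{e}_{i_t}\mathbf{e}_{i_t}^\top$, and the $X_t$ are independent across $t$. Since $E[\mathbf{e}_{i_t}\mathbf{e}_{i_t}^\top]=\frac1n I$, each term satisfies $E[X_t]=\frac1m I$, so $E[SS^\top]=I$ as noted in Remark 3; it remains to control the deviation of the centered sum $\sum_t Y_t$ with $Y_t=X_t-\frac1m I$.

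First I would record the two quantities that drive the Bernstein bound. The boundedness parameter is $L=\max_t\|Y_t\|_2\le \|X_t\|_2+\frac1m\le \frac{n+1}{m}$, since $X_t$ is a rank-one projection scaled by $n/m$ and hence $\|X_t\|_2=n/m$. For the variance, note that $\mathbf{e}_{i_t}\mathbf{e}_{i_t}^\top$ is idempotent, so $X_t^2=(n/m)X_t$ and $E[X_t^2]=\frac{n}{m^2}I$; subtracting $(E[X_t])^2=\frac1{m^2}I$ gives $E[Y_t^2]=\frac{n-1}{m^2}I$. Summing over $t$ yields the matrix variance $\sigma^2=\bigl\|\sum_t E[Y_t^2]\bigr\|_2=\frac{n-1}{m}$, which is precisely the source of the $(n-1)$ factor appearing in the statement.

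With these in hand I would invoke the matrix Bernstein inequality for a sum of independent Hermitian $n\times n$ matrices,
\[
P\!\left(\Bigl\|\sum_{t}Y_t\Bigr\|_2\ge \e\right)\le n\exp\!\left(\frac{-\e^2/2}{\sigma^2+L\e/3}\right),
\]
and substitute $\sigma^2=(n-1)/m$ and $L\approx n/m$. In the relevant small-$\e$ regime the variance term dominates the denominator, so the exponent is of order $-\e^2 m/(n-1)$ up to an absolute constant; forcing the right-hand side below $\delta$ then gives $m\gtrsim \frac{(n-1)\log(n/\delta)}{c\e^2}$, with the constant $c$ absorbing the factor $1/2$ from Bernstein together with the mild inflation from the $L\e/3$ term (a short calculation shows $c\ge 1/3$ suffices for all $\e\le 1$). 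This reproduces the stated bound and confidence level.

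The step I expect to be the main obstacle is not any single calculation but the correct bookkeeping in the Bernstein denominator: one must verify that over the admissible range of $\e$ the contribution $L\e/3=\frac{n\e}{3m}$ is dominated by (or comparable to) $\sigma^2=\frac{n-1}{m}$, so that the clean $1/\e^2$ scaling and the explicit constant $c\ge1/3$ can be extracted rather than a weaker $1/\e$ rate. Everything else—independence of the columns, the idempotence identity behind the variance, and the dimensional factor $n$—is routine.
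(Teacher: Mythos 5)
Your proposal is correct and follows essentially the same route as the paper: decompose $SS^\top$ into a sum of independent scaled rank-one projections, use idempotence to compute the per-term variance $(n-1)/m^2$ (the paper works with the unnormalized $\Q_i = n\q_i\q_i^\top - I$ so its variance reads $m(n-1)$, but this is the same calculation up to the $1/m$ scaling), and apply matrix Bernstein with the dimensional factor $n$ and a two-sided tail bound to extract $m\gtrsim (n-1)\log(n/\delta)/(c\e^2)$. The only cosmetic difference is your slightly looser boundedness parameter $(n+1)/m$ versus the paper's exact $n-1$ for the spectral norm of each centered term, which does not affect the conclusion.
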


Alg.~\ref{alg:2} summarizes the randomized variant of the proposed method.

\begin{algorithm}[h]
\caption{Randomized Regression for Similarity Learning}
\begin{algorithmic}[1]
\STATE {\bf Input:} Dataset $X\in \R^{d\times n}$, $\#$ Iterations $T$, $m$
\STATE Initialize $M_0$ as an identity matrix
\FOR{$k = 1,\cdots,T$}
\STATE Generate two random matrices $S_1, S_2\in \R^{n\times m}$
\STATE Compute the pairs selected by the random matrices $\hat{Y}_S = X^\top M_{k-1} X$
\STATE $Y_S = \left\{\begin{array}{r@{\quad:\quad}l} \max\{\hat{Y}_{i,j}, \delta_1\} & \mathrm{label}(\x_i)=\mathrm{label}(\x_j)\\ \min\{\hat{Y}_{i,j},\delta_2\} & otherwise\end{array} \right.$
\STATE Learn $M_{k} = \min_{M\in \R^{d\times d}}\|S_1^\top X^\top M XS_2-Y_S\|_F^2$
\ENDFOR
\RETURN $M=L_TR_T^\top$
\end{algorithmic}\label{alg:2}
\end{algorithm}

\subsection{High Dimensional Challenge}

To avoid overfitting and alleviate the storage challenge, the learned matrix is constrained to be of low rank and the corresponding problem is
\[\min_{M\in \R^{d\times d},rank(M)=r} \|X^\top M X-Y\|_F^2\]
where $r$ is the rank of the learned matrix and $r\ll d$.

Although this problem has the closed-form solution~\cite{YuS11}, the solution is only available for Frobenius norm and the computational cost is expensive (i.e., $\Ocal(dn^2+n^3)$).

Inspired by the idea of alternating optimization~\cite{prateek12,Netrapalli0S13}, we can decompose $M$ as $M = LR^\top$, where $L,R\in\R^{d\times r}$. Then the original problem is equivalent to
\begin{eqnarray}\label{eqn:alter}
\min_{L,R\in \R^{d\times r}} \|X^\top LR^\top X-Y\|_F^2
\end{eqnarray}
Note that $L$ and $R$ can be different since $M$ is not a PSD matrix.
The problem in (\ref{eqn:alter}) can be solved via the alternating method. Since the component ``$X^\dagger$'' can be reused at each iteration, it is just computed once at the beginning of the algorithm and the cost of each iteration is only $\Ocal(ndr+n^2r)$, which is affordable for high dimensional data. 

Alg.~\ref{alg:3} shows the algorithm with data compressing. Each subproblem (i.e., Steps \ref{step:1} and \ref{step:2}) has a closed-form solution and only a partial random projection is needed since the size of $X^\top R$ or $X^\top L$ is only $n\times r$ and the cost of computing pseudo inverse is light.

\begin{algorithm}[h]
\caption{Alternating Regression for Similarity Learning}
\begin{algorithmic}[1]
\STATE {\bf Input:} Dataset $X\in \R^{d\times n}$, $\#$ Iterations $T$, $m$
\STATE Initialize $L_0$ and $R_0$ using a randomly sampled subset
\FOR{$k = 1,\cdots,T$}
\STATE Generate a random matrix $S\in \R^{n\times m}$
\STATE Compute the corresponding target matrix $Y_S$
\STATE Learn $L_{k} = \min_{L\in \R^{d\times r}}\|S^\top X^\top LR_{k-1}^\top X-Y_S\|_F^2$ \label{step:1}
\STATE Generate a random matrix $S\in \R^{n\times m}$
\STATE Compute the corresponding target matrix $Y_S$
\STATE Learn $R_{k} = \min_{R\in \R^{d\times r}}\|X^\top L_kR^\top XS-Y_S\|_F^2$     \label{step:2}
\ENDFOR
\RETURN $M=L_TR_T^\top$
\end{algorithmic}\label{alg:3}
\end{algorithm}

\begin{cor}
Assume that the rank of the optimal solution $M_*$ for the Problem (\ref{eqn:ori}) is $r$ and the linear measurement $\mathcal{A}(M) = X^\top MX$ satisfies restricted isometry property (RIP)~\cite{RechtFP10}. When $n<d$, we have that the solution from Alg.~\ref{alg:1} is globally optimal with a geometric convergence rate
\[\|M_*-L_TR_T\|_F\leq e^{-T/2}\|M_*\|_F\]
\end{cor}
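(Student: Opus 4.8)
The plan is to recognize the low-rank regression problem as an instance of low-rank matrix sensing and then invoke the geometric-convergence guarantee for alternating minimization under RIP established in~\cite{prateek12,Netrapalli0S13}. First I would fix the correspondence: define the linear measurement operator $\mathcal{A}:\R^{d\times d}\to\R^{n\times n}$ by $\mathcal{A}(M)=X^\top M X$, so that in the realizable case the target satisfies $Y=\mathcal{A}(M_*)$ with $M_*$ of rank $r$. The factored objective in~(\ref{eqn:alter}), minimized alternately over $L$ and $R$ through Steps~\ref{step:1} and~\ref{step:2} of Alg.~\ref{alg:3} (the reference to Alg.~\ref{alg:1} in the statement appears to be a typo for the alternating scheme), is then exactly the alternating-minimization iteration analyzed in~\cite{prateek12}. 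Since $M=LR^\top$ with $L,R\in\R^{d\times r}$ and each subproblem is a linear least-squares problem with a closed form, the iterates $M_k=L_kR_k^\top$ are well defined.

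Next I would import the per-iteration contraction from the AltMin analysis. Under the assumed RIP of $\mathcal{A}$ on rank-$2r$ matrices with a sufficiently small isometry constant $\delta_{2r}$, the main theorem of~\cite{prateek12} guarantees that a full update of $L$ followed by $R$ strictly reduces the error in the relevant metric (the distance to $M_*$, via the principal angles of the iterate subspaces) by a constant factor, i.e. $\|M_*-M_k\|_F\le\rho\,\|M_*-M_{k-1}\|_F$ with $\rho<1$ controlled by $\delta_{2r}$. Choosing the RIP constant small enough to force $\rho\le e^{-1/2}$ and unrolling the recursion across the $T$ outer iterations yields $\|M_*-L_TR_T^\top\|_F\le e^{-T/2}\|M_*-M_0\|_F$.

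To recover the stated bound I would then control the initialization error. With the trivial initialization $M_0=0$, or any initialization lying inside the basin of attraction supplied by the spectral-initialization step of~\cite{prateek12}, one has $\|M_*-M_0\|_F\le\|M_*\|_F$; substituting this into the recursion gives exactly $\|M_*-L_TR_T^\top\|_F\le e^{-T/2}\|M_*\|_F$. The role of the hypothesis $n<d$ is to place us in the undersampled, high-dimensional regime where $\mathcal{A}$ is not invertible on all of $\R^{d\times d}$, so that it is the low-rank structure of $M_*$ together with the RIP of $\mathcal{A}$ that make $M_*$ identifiable from $Y$; this is precisely the setting in which the low-rank factorization is both necessary and justified.

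The main obstacle I anticipate is not the unrolling, which is routine once the contraction is in hand, but justifying that the structured, data-dependent operator $\mathcal{A}(M)=X^\top M X$ genuinely falls within the scope of the AltMin theorems, whose canonical statements are phrased for near-isometric random measurement ensembles. In particular, $\mathcal{A}$ only probes $M$ through the column space of $X$, so RIP can hold only on rank-$r$ matrices aligned with that subspace; verifying that the RIP hypothesis and the initialization requirement of~\cite{prateek12} can be met \emph{simultaneously} for this bilinear operator, and checking that the contraction constant $\rho$ can indeed be pushed below $e^{-1/2}$, is where the real care is required.
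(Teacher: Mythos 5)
Your proposal takes essentially the same route as the paper: the paper's entire proof is a one-line appeal to Theorem 2.2 of \cite{prateek12}, and your argument is simply a fleshed-out version of that citation (identifying $\mathcal{A}(M)=X^\top MX$ as the sensing operator, importing the per-iteration contraction under RIP, and unrolling it from the initialization). The caveats you raise at the end --- whether RIP can genuinely hold for this data-dependent bilinear operator, and whether the contraction factor can be pushed to $e^{-1/2}$ --- are legitimate gaps that the paper itself does not address, so your treatment is if anything more careful than the original.
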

\begin{proof}
It is directly from the Theorem 2.2 in \cite{prateek12}.
\end{proof}
\section{Theoretical Analysis}

\subsection{Proof of Theorem~\ref{thm:random}}
\label{sec:analysis}

The key of our proof is from the following corollary.
\begin{cor} \label{cor:gaussian}
~\cite{Zhang13} Let $S \in \R^{r\times m}$ be a standard Gaussian random matrix. Then, for any $0<\e\leq 1/2$, with a probability $1 - \delta$, we have
\[
    \left\|\frac{1}{m}SS^{\top} - I \right\|_2 \leq \e
\]
provided
\[
    m \geq \frac{(r+1)\log(2r/\delta)}{c\e^2}
\]
where constant $c$ is at least $1/4$.
\end{cor}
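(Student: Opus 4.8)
The plan is to reduce the spectral-norm deviation to a scalar concentration statement on the unit sphere and then discretize. Since $\frac1m SS^\top - I$ is symmetric, its spectral norm is attained on the sphere, so I would first write
\[
\left\|\tfrac1m SS^\top - I\right\|_2 = \sup_{\|u\|_2=1}\left|\tfrac1m\|S^\top u\|_2^2 - 1\right|.
\]
For a fixed unit vector $u\in\R^r$, the vector $S^\top u\in\R^m$ has i.i.d.\ $N(0,1)$ entries, so $\|S^\top u\|_2^2\sim\chi^2_m$ and $\tfrac1m\|S^\top u\|_2^2$ is an average of $m$ independent unit-variance squared Gaussians with mean $1$. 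The whole problem is therefore controlled once I have a sharp pointwise tail for this average together with a device to upgrade pointwise control to uniform control over the sphere.

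For the pointwise tail I would invoke the standard two-sided $\chi^2$ bound (Laurent--Massart), or equivalently sub-exponential Bernstein applied to the centred variables $Z_j^2-1$ with $Z_j\sim N(0,1)$. This yields, for $0<\e\le 1/2$, a bound of the form
\[
\Pr\!\left[\left|\tfrac1m\|S^\top u\|_2^2 - 1\right|\ge \e\right]\le 2\,e^{-c m\e^2},
\]
whose exponent constant approaches $1/4$ as $\e\to0$ (the sub-Gaussian regime of the $\chi^2$ tail) and stays bounded below by a universal constant throughout $\e\le 1/2$; this is the origin of the stated $c\ge 1/4$. I expect no difficulty here beyond tracking that the quadratic part of the $\chi^2$ tail dominates the linear part when $\e$ is bounded away from $1$.

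To pass from a fixed $u$ to the supremum I would take a $1/4$-net $\mathcal N$ of the unit sphere in $\R^r$, of cardinality $|\mathcal N|\le 9^r$, and use the discretization inequality for symmetric matrices $\|A\|_2\le 2\max_{u\in\mathcal N}|u^\top A u|$. Applying the pointwise tail with tolerance $\e/2$ at each net point and a union bound gives a failure probability at most $9^r\cdot 2\,e^{-cm\e^2/4}$. Forcing this below $\delta$ requires $cm\e^2/4\ge r\log 9+\log(2/\delta)$, i.e.\ $m\gtrsim (r+\log(1/\delta))/(c\e^2)$; since $(r+1)\log(2r/\delta)$ dominates $r\log 9+\log(2/\delta)$, the stated condition is a clean sufficient choice, and on the complementary event the discretization inequality delivers $\|\tfrac1m SS^\top-I\|_2\le\e$.

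The only genuinely delicate step is constant bookkeeping: matching the net-induced factors against the $\chi^2$ sub-Gaussian constant so that the final threshold lands in the advertised form. I would also note that the stated bound carries a $\log r$ overhead relative to the sharp result $m\gtrsim(r+\log(1/\delta))/\e^2$, which is obtainable by replacing the crude net with a Gaussian comparison or chaining argument; since the weaker $(r+1)\log(2r/\delta)$ already suffices for every downstream use in the proof of Theorem~\ref{thm:random}, the net argument above is the most economical route and I would not pursue the sharper constant.
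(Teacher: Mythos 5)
The paper does not actually prove this corollary---it is imported verbatim from the cited reference---but the form of the bound, and the paper's own proof of the companion Lemma~\ref{lemma:sampling}, make clear what the intended derivation is: one sets $\Q_i = g_ig_i^\top - I$ for the columns $g_i\sim N(0,I_r)$ of $S$, computes $E[\Q_i^2] = E[gg^\top gg^\top] - I = (r+2)I - I = (r+1)I$ (which is exactly where the factor $(r+1)$ comes from, mirroring the $(n-1)$ in Lemma~\ref{lemma:sampling}), and applies a matrix Bernstein inequality, whose ambient-dimension prefactor produces the $\log(2r/\delta)$ and whose denominator produces $c\geq 1/4$. Your route---the variational identity $\|\tfrac1m SS^\top - I\|_2 = \sup_{\|u\|=1}|\tfrac1m\|S^\top u\|_2^2-1|$, the observation that $\|S^\top u\|_2^2\sim\chi^2_m$, a Laurent--Massart tail, and a $1/4$-net with the factor-$2$ discretization inequality---is a genuinely different and entirely standard argument (it is the textbook covariance-estimation proof), and it is sound step by step; it even yields the sharper requirement $m\gtrsim (r+\log(1/\delta))/\e^2$ without the $\log r$, as you note. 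What it does not buy you is the statement \emph{as written}: your sufficient condition is $m \geq C\bigl(r\log 9+\log(2/\delta)\bigr)/\e^2$ with $C$ around $16$, and the claim that $(r+1)\log(2r/\delta)$ dominates this is false for small $r$ and moderate $\delta$ (e.g.\ $r=1$, $\delta=1/2$), so the advertised constant $c\geq 1/4$ in front of $(r+1)\log(2r/\delta)$ is not recovered by your bookkeeping. If the exact constants matter downstream you should run the matrix Bernstein argument (handling the unboundedness of $g g^\top - I$ via its sub-exponential moment bounds or a truncation); if only the order of $m$ matters---and for Theorem~\ref{thm:random} it does---your argument is a perfectly good, and in fact tighter, substitute.
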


The main idea of the proof is similar to that in~\cite{DrineasMM06}. Before the proof, we first give some useful lemmas.
\begin{lem}\label{lemma:rank}
Given a rank $r$ matrix $A = U_A\s_AV_A^\top\in \R^{n\times d}$ and a normalized Gaussian random matrix $S\in \R^{n\times m}$, which is normalized by $\sqrt{m}$, with a probability $1-\delta$, we have
\[|1-\delta_i^2(S^\top U_A)|\leq \e\]
and
\[rank(S^\top U_A) = rank(U_A)\]
where $m = \Ocal(r\log(r))$ and $\delta_i(S^\top U_A)$ is the $i$-th singular value of $S^\top U$.
\end{lem}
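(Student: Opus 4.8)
The plan is to reduce the claim directly to Corollary~\ref{cor:gaussian} by exploiting the rotational invariance of the Gaussian distribution. Since $A = U_A\s_A V_A^\top$ is the (thin) SVD of a rank-$r$ matrix, the factor $U_A \in \R^{n\times r}$ has orthonormal columns, i.e. $U_A^\top U_A = I_r$. Writing the normalized random matrix as $S = \frac{1}{\sqrt{m}}G$ with $G \in \R^{n\times m}$ having i.i.d. standard Gaussian entries, the central observation is that the squared singular values of $S^\top U_A$ are exactly the eigenvalues of the $r\times r$ Gram matrix $U_A^\top SS^\top U_A = \frac{1}{m}(U_A^\top G)(U_A^\top G)^\top$. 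Thus it suffices to control how far this Gram matrix deviates from $I_r$.

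First I would argue that $\tilde{G} := U_A^\top G \in \R^{r\times m}$ is itself a standard Gaussian matrix. Each column of $G$ is distributed as $\mathcal{N}(0, I_n)$, and projecting onto the orthonormal rows of $U_A^\top$ yields a column distributed as $\mathcal{N}(0, U_A^\top U_A) = \mathcal{N}(0, I_r)$; independence across columns is preserved, so $\tilde{G}$ has i.i.d. $\mathcal{N}(0,1)$ entries. This is the single place where the orthonormality of $U_A$ is essential, and it is the step I would flag as the conceptual core of the argument, although it is routine given the rotational invariance of Gaussians.

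With $\tilde{G}$ identified as a standard Gaussian matrix in $\R^{r\times m}$, I would apply Corollary~\ref{cor:gaussian} (its ``$r$'' being our $r$) to obtain $\|\frac{1}{m}\tilde{G}\tilde{G}^\top - I_r\|_2 \leq \e$ with probability $1-\delta$, provided $m \geq (r+1)\log(2r/\delta)/(c\e^2) = \Ocal(r\log r)$. Since $U_A^\top SS^\top U_A = \frac{1}{m}\tilde{G}\tilde{G}^\top$, its eigenvalues lie in $[1-\e, 1+\e]$, and because these eigenvalues are precisely $\delta_i^2(S^\top U_A)$, the bound $|1 - \delta_i^2(S^\top U_A)| \leq \e$ follows immediately for every $i$.

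Finally, for the rank statement I would note that $\e \leq 1/2 < 1$ forces $\delta_i^2(S^\top U_A) \geq 1 - \e > 0$ for all $i = 1,\dots,r$, so $S^\top U_A \in \R^{m\times r}$ has $r$ nonzero singular values and hence full column rank $r = \mathrm{rank}(U_A)$. The only genuine bookkeeping is keeping the $\sqrt{m}$ normalization consistent with the $1/m$ factor appearing in the corollary; beyond the Gaussian-invariance observation I do not anticipate any real obstacle.
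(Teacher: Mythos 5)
Your proof is correct and follows exactly the route the paper intends: the paper omits the details (citing Drineas et al.) but explicitly flags Corollary~\ref{cor:gaussian} as the key ingredient, and your reduction via rotational invariance of the Gaussian (so that $U_A^\top G$ is itself standard Gaussian and the squared singular values of $S^\top U_A$ are the eigenvalues of $\frac{1}{m}U_A^\top GG^\top U_A$) is the standard way to supply those details. The rank conclusion from $\delta_i^2(S^\top U_A)\geq 1-\e>0$ is also handled correctly.
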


\begin{lem}\label{lemma:d2t}
Let $A$ and $S$ are the matrices defined in Lemma~\ref{lemma:rank}, with a probability $1-\delta$, we have
\[
(S^\top A)^\dagger = V_A\s_A^{-1} (S^\top U_A)^\dagger
\]
\end{lem}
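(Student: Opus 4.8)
The plan is to reduce the statement to the classical \emph{reverse order law} for the Moore--Penrose pseudo-inverse, applied twice to the compact singular value decomposition of $A$. Writing $A = U_A\s_A V_A^\top$ with $U_A\in\R^{n\times r}$ and $V_A\in\R^{d\times r}$ having orthonormal columns and $\s_A\in\R^{r\times r}$ diagonal and invertible, we have
\[
S^\top A = (S^\top U_A)\,\s_A\, V_A^\top .
\]
The goal is then to show that the pseudo-inverse distributes across this product in reverse order, i.e.\ $(S^\top A)^\dagger = (V_A^\top)^\dagger\,\s_A^{-1}\,(S^\top U_A)^\dagger = V_A\s_A^{-1}(S^\top U_A)^\dagger$, where the last equality uses $(V_A^\top)^\dagger = V_A$ (valid since $V_A^\top V_A = I_r$).

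First I would record the rank facts that make the reverse order law applicable. On the probability-$(1-\delta)$ event of Lemma~\ref{lemma:rank} we have $\mathrm{rank}(S^\top U_A)=\mathrm{rank}(U_A)=r$; since $S^\top U_A\in\R^{m\times r}$ with $m\geq r$ (as $m=\Ocal(r\log r)$), this means $S^\top U_A$ has full column rank, so $(S^\top U_A)^\dagger (S^\top U_A) = I_r$. Dually, $V_A^\top\in\R^{r\times d}$ has full row rank $r$, so $V_A^\top (V_A^\top)^\dagger = I_r$, and $\s_A$ is a nonsingular square matrix with $\s_A^\dagger = \s_A^{-1}$. These are exactly the hypotheses under which the reverse order law is guaranteed to hold.

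Next I would invoke the reverse order law, which I would quote as a preliminary fact: if $B$ has full column rank and $C$ has full row rank, then $(BC)^\dagger = C^\dagger B^\dagger$. (This follows by checking the four Moore--Penrose axioms using $B^\dagger B = I$ and $CC^\dagger = I$; e.g.\ $BC\,C^\dagger B^\dagger\,BC = B(CC^\dagger)(B^\dagger B)C = BC$, while $C^\dagger B^\dagger BC = C^\dagger C$ and $BCC^\dagger B^\dagger = BB^\dagger$ are both symmetric.) I apply it first with $B = (S^\top U_A)\s_A$, which has full column rank as a full-column-rank matrix times an invertible matrix, and $C = V_A^\top$, which has full row rank, to get $(S^\top A)^\dagger = V_A\,\big((S^\top U_A)\s_A\big)^\dagger$. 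I then apply it once more with $B = S^\top U_A$ (full column rank) and $C = \s_A$ (invertible, hence full row rank) to obtain $\big((S^\top U_A)\s_A\big)^\dagger = \s_A^{-1}(S^\top U_A)^\dagger$. Substituting yields the claim.

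The only delicate point, and the step I would treat as the main obstacle, is that $(BC)^\dagger = C^\dagger B^\dagger$ is \emph{false} for general $B,C$, so the whole argument hinges on verifying the full-rank hypotheses at each of the two splits; this is precisely where Lemma~\ref{lemma:rank} is essential, and it is also the source of the ``with probability $1-\delta$'' qualifier, which is simply inherited from that lemma. Everything else is routine bookkeeping with the orthonormality of $U_A,V_A$ and the invertibility of $\s_A$.
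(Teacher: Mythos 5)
Your proof is correct: the decomposition $S^\top A=(S^\top U_A)\s_A V_A^\top$ together with two applications of the reverse order law, justified by the full-rank conditions supplied by Lemma~\ref{lemma:rank}, is exactly the standard argument. The paper itself omits the proof and defers to the corresponding lemma in \cite{DrineasMM06}, whose proof proceeds along the same lines, so your attempt fills the gap in essentially the intended way.
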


\begin{lem}\label{lemma:omega}
Let $\Omega = (S^\top U_A)^\dagger S^\top - (S^\top U_A)^\top S^\top$, where $U_A$ and $S$ are the matrices in Lemma~\ref{lemma:rank} and $X^\dagger$ is the pseudo inverse of matrix $X$, with a probability $1-2\delta$, we have
\[\|\Omega\|_2\leq\frac{\e\sqrt{1+\e}}{\sqrt{1-\e}}\]
\end{lem}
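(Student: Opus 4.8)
The plan is to factor $\Omega$ into a product of two matrices whose spectral norms can be controlled separately, bound each factor on its own high-probability event, and then combine them by a union bound. Writing $W = S^\top U_A \in \R^{m\times r}$, note that $(S^\top U_A)^\dagger S^\top = W^\dagger S^\top$ and $(S^\top U_A)^\top S^\top = W^\top S^\top$, so that $\Omega = (W^\dagger - W^\top)S^\top$. I would then estimate
\[
\|\Omega\|_2 \le \|W^\dagger - W^\top\|_2\,\|S^\top\|_2
\]
and treat the two factors in turn. The whole point of this factorization is that the claimed bound $\frac{\e\sqrt{1+\e}}{\sqrt{1-\e}}$ splits exactly as $\frac{\e}{\sqrt{1-\e}}\cdot\sqrt{1+\e}$, so each factor should contribute one piece.

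For the first factor I would pass to the (thin) SVD $W = P\Gamma Q^\top$, where $P\in\R^{m\times r}$ and $Q\in\R^{r\times r}$ have orthonormal columns and $\Gamma = \mathrm{diag}(\delta_1,\dots,\delta_r)$ collects the singular values of $S^\top U_A$. Lemma~\ref{lemma:rank} guarantees, with probability $1-\delta$, both $\mathrm{rank}(W)=r$ (so $W^\dagger = Q\Gamma^{-1}P^\top$ is well defined) and $|1-\delta_i^2|\le\e$, i.e. $\delta_i\in[\sqrt{1-\e},\sqrt{1+\e}]$. Since $W^\top = Q\Gamma P^\top$, we get $W^\dagger - W^\top = Q(\Gamma^{-1}-\Gamma)P^\top$ and therefore
\[
\|W^\dagger - W^\top\|_2 = \max_i \frac{|1-\delta_i^2|}{\delta_i} \le \frac{\e}{\sqrt{1-\e}},
\]
because the numerator is at most $\e$ and the denominator at least $\sqrt{1-\e}$.

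For the second factor I would invoke a spectral-norm concentration bound for the sketch of the same flavor as Corollary~\ref{cor:gaussian} (or Lemma~\ref{lemma:sampling} in the sampling case): on the event, of probability $1-\delta$, that $\|SS^\top - I\|_2\le\e$, one has $\|S^\top\|_2^2 = \|SS^\top\|_2 \le 1 + \|SS^\top - I\|_2 \le 1+\e$, hence $\|S^\top\|_2\le\sqrt{1+\e}$. Multiplying the two factor bounds and intersecting the two events — so the total failure probability is at most $2\delta$, exactly the $1-2\delta$ in the statement — gives $\|\Omega\|_2 \le \frac{\e}{\sqrt{1-\e}}\cdot\sqrt{1+\e} = \frac{\e\sqrt{1+\e}}{\sqrt{1-\e}}$.

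The main obstacle is the second factor. The singular-value estimates from Lemma~\ref{lemma:rank} only describe how $S^\top$ acts on the fixed $r$-dimensional subspace $\mathrm{range}(U_A)$ — equivalently, they control the spectrum of $W^\top W = U_A^\top SS^\top U_A$ — whereas $\|S^\top\|_2$ is the operator norm of $S^\top$ over all of $\R^n$. Establishing $\|SS^\top - I\|_2\le\e$ for the full $n\times n$ Gram matrix is a strictly stronger requirement: applying the $r\times m$ form of Corollary~\ref{cor:gaussian} after projecting onto $\mathrm{range}(U_A)$ recovers only Lemma~\ref{lemma:rank}, not the full-dimensional bound. So the delicate step is supplying this spectral-norm control of $S$ at the sketch size $m$ actually used, and I would take care to budget its probability separately from that of Lemma~\ref{lemma:rank}. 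If one is unwilling to pay for the full-dimensional concentration, the alternative is to restrict attention to the range of $W$ on which $\Omega$ genuinely acts, since $W^\dagger$ and $W^\top$ both annihilate the orthogonal complement of $\mathrm{range}(W)$; carrying the argument through on that subspace is where I would expect the real care to be needed.
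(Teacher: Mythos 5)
Your first factor is handled correctly and is the standard step: on the event of Lemma~\ref{lemma:rank}, writing $W=S^\top U_A=P\Gamma Q^\top$, you get $\|W^\dagger-W^\top\|_2=\max_i|1-\delta_i^2|/\delta_i\le\e/\sqrt{1-\e}$. The proof fails at the second factor, and the obstacle you flag at the end is not merely delicate --- it is fatal to your factorization. For $S\in\R^{n\times m}$ with $m<n$ (and $m=\Ocal(r_\mathcal{D}\log r_\mathcal{D})\ll n$ is the entire point of the sketch), $SS^\top$ is an $n\times n$ matrix of rank at most $m$, so $\|SS^\top-I\|_2\ge 1$ holds deterministically: the event you propose to condition on is empty. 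Concretely, for the normalized Gaussian $S$ (entries of variance $1/m$) one has $\|S^\top\|_2=\|S\|_2\approx 1+\sqrt{n/m}$, so the submultiplicative split $\|\Omega\|_2\le\|W^\dagger-W^\top\|_2\,\|S^\top\|_2$ can only deliver a bound of order $\e\sqrt{n/m}$, not the claimed $\e\sqrt{1+\e}/\sqrt{1-\e}$. Your fallback of restricting to $\mathrm{range}(W)$ does not rescue this: $\Omega=(W^\dagger-W^\top)PP^\top S^\top$ indeed, but $P^\top S^\top=\Gamma^{-1}Q^\top U_A^\top SS^\top$, and the cross block $U_A^\top SS^\top U_A^\perp$ equals $\frac{1}{m}GH^\top$ for independent standard Gaussian $G\in\R^{r\times m}$, $H\in\R^{(n-r)\times m}$, whose spectral norm is $\Theta(\sqrt{n/m})$. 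The problem is the domain $\R^n$ on which $S^\top$ acts first, not the range of $\Omega$.

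The paper does not supply a proof of this lemma (it defers to the argument of Drineas et al.), but the intended route is visibly different from yours: one never bounds $\|S^\top\|_2$ in isolation. The input $y$ is split as $U_AU_A^\top y+U_A^\perp U_A^{\perp\top}y$; on the first piece $(W^\dagger-W^\top)S^\top U_AU_A^\top=(W^\dagger W-W^\top W)U_A^\top=(I-W^\top W)U_A^\top$, which Lemma~\ref{lemma:rank} controls by $\e$, while on the second piece the smallness must come from an approximate matrix-multiplication property of the sketch, $U_A^\top SS^\top B\approx U_A^\top B=0$ for the \emph{specific} matrix $B$ (a component of the residual $\r$) that $\Omega$ multiplies in the proof of Theorem~\ref{thm:random} --- this is precisely why the trailing $S^\top$ is kept inside $\Omega$ rather than bounded on its own. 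To repair your argument you would need to prove the bound in that coupled form (or restrict the statement of the lemma to inputs of the form actually used); as written, the step $\|S^\top\|_2\le\sqrt{1+\e}$ is false and the decomposition cannot yield the stated conclusion.
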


We skip the proofs for these lemmas since the proof is similar as in \cite{DrineasMM06}. Now, we show the key steps of the proof for Theorem~\ref{thm:random} due to the space limitation.


We try to bound the difference between $M$ and $\hat{M}$ as
\begin{eqnarray*}
\|M-\hat{M}\|_2 = \|A^\dagger Y (B^\top)^\dagger - (S_1^\top A)^\dagger S_1^\top Y S_2 (B^\top S_2)^\dagger\|_2
\end{eqnarray*}
where $M$ and $\hat{M}$ are the solutions for problems (\ref{eqn:iterk}) and (\ref{eqn:rand}), respectively.
According to Lemma~\ref{lemma:d2t}, with a probability $1-2\delta$, we have
\begin{eqnarray*}
\|M-\hat{M}\|_2\leq \frac{1}{\delta_{\min}^A\delta_{\min}^B}\|(S_1^\top U_A)^\dagger S_1^\top \r S_2 (U_B^\top S_2)^\dagger \|_2
\end{eqnarray*}
where the residual $\r = Y-AMB^\top = U_A^\bot U_A^{\bot\top} Y+U_AU_A^\top Y U_B^\bot U_B^{\bot\top}$

Let $\Omega_A = (S_1^\top U_A)^\dagger S_1^\top-(S_1^\top U_A)^\top S_1^\top$ and $\Omega_B = S_2(B^\top S_2)^\dagger - S_2(B^\top S_2)^\top$.
Then
\begin{eqnarray}\label{eqn:2norm}
&&\|M-\hat{M}\|_2 \leq\nonumber\\
&&\frac{1}{\delta_{\min}^A\delta_{\min}^B}\|(\Omega_A+U_A^\top S_1 S_1^\top) \r (\Omega_B+S_2S_2^\top U_B) \|_2 \nonumber
\end{eqnarray}

First, we can proof that with a probability $1-2\delta$, it is
\begin{eqnarray}\label{eqn:comp1}
\|U_A^\top SS^\top\|_2 \leq 1+\e
\end{eqnarray}
Using the similar procedure, with a probability $1-3\delta$, we have
\begin{eqnarray}\label{eqn:comp2}
\|U_AU_A^\top - U_AU_A^\top SS^\top\|_2\leq \sqrt{\e^2+2\e}
\end{eqnarray}

Since $\r$ has two components, we investigate the pattern as
\begin{eqnarray}\label{eqn:comp3}
&&\|U_A^\top S_1 S_1^\top U_A^\bot U_A^{\bot\top} Y\|_2 = \|U_AU_A^\top S_1 S_1^\top U_A^\bot U_A^{\bot\top} Y\|_2\nonumber\\
&&=\|U_AU_A^\top U_A^\bot U_A^{\bot\top} Y-U_AU_A^\top S_1 S_1^\top U_A^\bot U_A^{\bot\top} Y\|_2\nonumber\\
&&\leq\|U_AU_A^\top - U_AU_A^\top S_1 S_1^\top\|_2\|U_A^\bot U_A^{\bot\top} Y\|_2
\end{eqnarray}

By taking Eqn.~\ref{eqn:comp1}-\ref{eqn:comp3} back to the Eqn.~\ref{eqn:2norm}, and applying union bound, with a probability $1-24\delta$ we have
\begin{eqnarray*}
&&\|M-\hat{M}\|_2\leq \frac{\|\r\|_2}{\delta_{\min}^A\delta_{\min}^B}(\frac{\e^2(1+\e)}{1-\e}+2\frac{\e(1+\e)^{3/2}}{\sqrt{1-\e}})\\
&&+\frac{(1+\e)\sqrt{\e^2+2\e}}{\delta_{\min}^A\delta_{\min}^B}(\| U_A^\bot U_A^{\bot\top}Y\|_2\\
&&+\|U_AU_A^\top Y U_B^\bot U_B^{\bot\top}\|_2)
\end{eqnarray*}
We finish the proof with
\begin{eqnarray}\label{eqn:residual}
2\|\r\|_2> \| U_A^\bot U_A^{\bot\top}Y\|_2+\|U_AU_A^\top Y U_B^\bot U_B^{\bot\top}\|_2
\end{eqnarray}

Furthermore, if we take a similar assumption as in~\cite{DrineasMM06}
\[\|\r\|_2 = \|Y - U_AU_A^\top YU_BU_B^\top\|_2\leq \gamma \|U_AU_A^\top YU_BU_B^\top\|_2\]
where $\gamma\in(0,1)$, we have
\begin{eqnarray*}
&&\|M-\hat{M}\|_2\leq \frac{\gamma\delta_{\max}^A \delta_{\max}^B\|M\|_2}{\delta_{\min}^A\delta_{\min}^B}(\frac{\e^2(1+\e)}{1-\e}\\
&&+2\frac{\e(1+\e)^{3/2}}{\sqrt{1-\e}}+2\sqrt{2\e}(1+\e)^{3/2})
\end{eqnarray*}
due to
\[\|M\|_2=\|A^\dagger Y B^{\dagger\top}\|_2\geq \frac{1}{\delta_{\max}^A \delta_{\max}^B}\|U_AU_A^\top YU_BU_B^\top\|_2 \]

\subsection{Proof of Lemma~\ref{lemma:sampling}}
\begin{proof}
We define the random variable $\Q_i$ as
\[\Q_i = n\q_i\q_i^\top-I\]
where $\q_i$ is randomly selected from $\{\mathbf{e}_1,\cdots,\mathbf{e}_n\}$ with probability $1/n$.
It is obvious that $\Q_i$ is self-adjoint, $E[\Q_i]=0$ and $\delta_{\max}\leq n-1$.
Furthermore, we have
\[E[\Q_i^2] = n^2E[\q_i\q_i^\top\q_i\q_i^\top] - 2nE[\q_i\q_i^\top]+I = (n-1)I\]
Therefore, $\sigma^2 = \|\sum_i^m E[\Q_i^2]\|_2 = m(n-1)$.
According to Bernstein's inequality~\cite{Recht11}, we have
\[\Pr\{\delta_{\max} (\sum_{i=1}^m \Q_i)\geq \e\}\leq n \exp(\frac{-\e^2/2}{m(n-1)+(n-1)\e/3})\]
With a similar procedure, we can bound $\delta_{\max} (-\sum_i \Q_i)$.

Combining them together, with a probability $1-\delta$, we have
\[\|SS^\top -I\|_2\leq \e\]
provided \[m\geq \frac{(n-1)\log{n/\delta}}{c\e^2}\]
where each column of $S$ is uniformly sampled from $\{\sqrt{n/m}\mathbf{e}_1,\cdots,\sqrt{n/m}\mathbf{e}_n\}$ and constant $c \geq 1/3$.
\end{proof}

\section{Experiments}
\label{sec:exp}

Four state-of-the-art similarity learning algorithms are included in comparison to verify the effectiveness of the proposed method.
\begin{compactitem}
\item LMNN~\cite{weinberger2009}: DML methods with triplet constraints;
\item FRML~\cite{LimL14}: Efficient DML methods for ranking with mini-batch strategy;
\item OASIS~\cite{chechik2010}: Bilinear model for similarity learning;
\item SLR: Similarity learning with adaptive regression and alternating solver.
\end{compactitem}.

Besides, Euclid is the baseline with Euclidean distance directly. We use the implementations provided by the authors with the recommended parameter settings. We set the number of constraints for FRML and OASIS as $N=10^6$ to fully explore the information contained in the data. Note that this number is still much less than $n^2$. Since LMNN and FRML can optimize the low rank part of the metric (i.e., $L$) directly, we set the rank of the metric as $100$ which is also used by our method. Thresholds $\delta_1$ and $\delta_2$ are set to be $1$ and $0$, respectively, while the number of iterations is set as $T=10$. All experiments are implemented on a server with 64GB memory and $12\times 2.4$GHz CPUs.

We evaluate the learned metric on a ranking task, which is the same as in~\cite{chechik2010}. For the DML methods, the ranking list is given according to the distance to the query in the increasing order while that for the bilinear model is output by the similarity in the descending order. The mean-average-precision (mAP) is used to evaluate the ranking performance. Below is a description of the image datasets used in our experiments.

\begin{table}[!ht]
\centering
\caption{Comparison of mAP($\%$) on {\it Caltech101}.}\label{tab:1}
\begin{tabular}{|c|c|c|c|c|}
\hline
Euclid                 &LMNN            &FRML          &OASIS              &SLR \\\hline
27.7$\pm$0.7         &43.7$\pm$0.9  & 46.6$\pm$0.9&48.7$\pm$0.8&\textbf{55.3$\pm$0.9}   \\\hline
\end{tabular}
\end{table}

\begin{figure*}[!ht]
\centering
\includegraphics[width=6in]{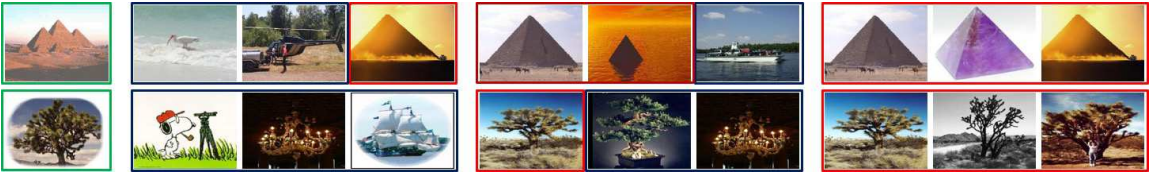}
\caption{Examples of retrieved images. The first column indicates the query images highlighted by green bounding boxes. Columns $2$-$4$ include the most similar images measured by the metric learned by FRML. Columns $5$-$7$ show those by the metric from OASIS. Columns $8$-$10$ are from the metric of SLR. Images in columns $2$-$10$ are highlighted by red bounding boxes when they share the same category as queries, and blue bounding boxes if they are not.}\label{fig:1}
\end{figure*}

\subsection{\it Caltech101}
{\it Caltech101} is a benchmark image dataset which consists of $101$ different objects~\cite{Fei-FeiFP07}. We extract LLC features~\cite{YangYGH09} as representations for each image and reduce the dimension to $1,000$ by PCA to make the comparison with other methods. The final set contains $8,677$ examples with $1,000$ features. We randomly select $70\%$ for training and the rest for testing. The experiments are repeated $5$ times with different splits (of the same proportion between training and test sets) and average results with standard deviation are reported in Table~\ref{tab:1}. 

First, it is obvious that both distance based and similarity based methods outperforms the Euclidean distance. Second, we observe that methods using the bilinear model (i.e., OASIS and SLR) are better than the distance based method. This is because the bilinear model is more flexible for ranking. Second, the proposed method outperforms other methods significantly, which is because SLR can visit all pairs and hence no information is lost. Fig.~\ref{fig:1} shows the examples of retrieved images. The metric learned by SLR can return the right images while the metrics of other methods make certain mistakes. 

To demonstrate the advantage of adaptive regression, we compare SLR to the variant with the fixed target matrix as in Eqn.~\ref{eqn:label}, which is denoted as ``\textbf{SLR-Fixed}''. We also include the whole metric computed from the closed-form solution as in Eqn.~\ref{eqn:solution}, which is denoted as ``\textbf{SLR-Whole}'' for comparison, and the result is summarized in Fig.~\ref{fig:2}. It is observed that SLR is even better than SLR-Fixed and SLR-Whole, which verifies the effectiveness of the adaptive regression. Moreover, we find that SLR-Fixed almost performs the same as SLR-Whole, and it confirms that the alternating method can achieve global optimal solution for the non-convex problem. Finally, SLR and SLR-Fixed converges to a good solution in only $2$ iterations, which is consistent with the analysis of the alternating method.
\begin{figure}[!ht]
\centering
\includegraphics[width=2in]{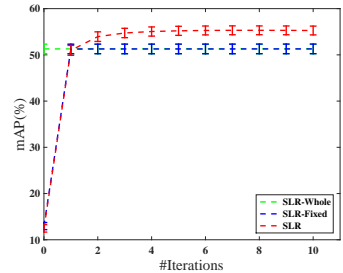}
\caption{Comparison of SLR with the fixed target matrix.}\label{fig:2}
\end{figure}

We further conduct the experiments to illustrate the influence of data compressing. We denote SLR with Gaussian random projection and alternating target matrix (i.e., in Alg.~\ref{alg:3}) as ``\textbf{SLR-G-*}'' and column sampling as ``\textbf{SLR-C-*}''. The number $*$ denotes the dimension after compressing and results are shown in Fig.~\ref{fig:3}. We can find that the performance of SLR-G and SLR-C can approach that of SLR, which verifies our analysis in Theorem~\ref{thm:random}.

\begin{figure}[!ht]
\centering
\includegraphics[width=2in]{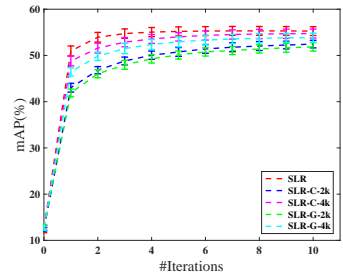}
\caption{Comparison of SLR with data compressing.}\label{fig:3}
\end{figure}

\subsection{{\it $ImageNet50$}}
{\it ImageNet50} consists of 50 randomly selected categories from the ImageNet dataset~\cite{ILSVRCarxiv14}. There are $101,687$ images with $1,000$ features in the dataset and the experimental settings are the same as for the {\it Caltech101} dataset except that we reduce the number of SLR's iterations to $5$ due to the large size of training set. Since the whole target metric is too large, we use the result of SLR-C-20k to stand for that of SLR. Table~\ref{tab:2} summarizes the results from different methods. A similar phenomenon, as for {\it Caltech101} can be observed, where SLR achieves the best ranking performance compared to state-of-the-art methods.
\begin{table}[!ht]
\centering
\caption{Comparison of mAP($\%$) on {\it $ImageNet50$}.}\label{tab:2}
\begin{tabular}{|c|c|c|c|c|}
\hline
Euclid            &LMNN         &FRML            &OASIS            &SLR \\\hline
6.8$\pm$0.1     &7.6$\pm$0.1&   8.2$\pm$0.1&11.1$\pm$0.1   &\textbf{14.2$\pm$0.1} \\\hline
\end{tabular}
\end{table}

\begin{table}[!ht]
\centering
\caption{Comparison of CPU Time (minutes).}\label{tab:3}
\begin{tabular}{|l|c|c|c|c|}
\hline
Data            &LMNN   &FRML   &OASIS   &SLR        \\\hline
{\it Caltech101}&725.5 &766.8  &249.5   &\textbf{3.2}   \\\hline
{\it ImageNet50}&521.0 &609.3  &614.6   &\textbf{30.0}   \\\hline
\end{tabular}
\end{table}

To further verify the efficiency of the proposed method, we compare the run time of methods in the experiments shown in Table~\ref{tab:3}. It is observed that the proposed method is more efficient than other methods. We attribute it to the fact that SLR costs $\Ocal(nd^2)$ to go through all examples while the other methods take $\Ocal(Nd^2)$ to learn a good metric, where $N$ is the number of constraints and $N\gg n$.

\section{Conclusions}
\label{sec:conclusion}

In this work, we use bilinear model to rank large-scale images. The challenge of large-scale data is alleviated by writing the learning problem as a matrix regression task, which has a closed-form solution. To further compress data, the randomized variant of the proposed method is developed. We address the challenge from high dimensionality by applying alternating method with low rank assumption. Unlike most of previous methods with low rank assumption, the solution of the proposed method is guaranteed to be global optimal. Different from many previous methods that use stochastic algorithm for efficiency, our method can go through all pairs of images without information lost. In the future, we plan to apply our randomized algorithm to the dataset with the huge number of images that cannot be handled by a single machine. We will also explore our method in other applications (e.g., high dimensional document data).

\bibliographystyle{aaai}
\bibliography{mr_15}

\end{document}